\newcommand{\argmin}{\operatorname*{arg \ min}}
\newcolumntype{C}[1]{>{\centering\arraybackslash}p{#1}}
\newtheorem{lemma}{Lemma}
\DeclareMathOperator{\SE}{SE}
\DeclareMathOperator{\E}{E}
\DeclareMathOperator{\Var}{Var}
\DeclareMathOperator{\Cov}{Cov}
\DeclareMathOperator{\median}{median}
\DeclareMathOperator{\sd}{SD}
\DeclareMathOperator{\se}{SE}
\newtcolorbox[auto counter]{mybox}[2][]{float,title={\textcolor{black}{Box~\thetcbcounter: #2}},#1, 
colframe=white!70!gray}
\title{Revisiting inference after prediction}
\author{Keshav Motwani \\
    Department of Biostatistics, University of Washington, \\ \\
    Daniela Witten \\
    Departments of Biostatistics and Statistics, University of Washington}
\date{\today}{}
\begin{document}

\maketitle

\begin{abstract}
Recent work has focused on the very common practice of prediction-based inference: that is,   (i) using a pre-trained machine learning model to predict an unobserved
response variable, and then (ii) conducting inference on the association between that predicted response and some covariates. 
As pointed out by \citet{wang2020methods}, applying a standard inferential approach in  (ii) does not accurately quantify the association between the unobserved (as opposed to the predicted) response and the covariates. In recent work, \citet{wang2020methods} and \citet{angelopoulos2023prediction} propose corrections to step (ii) in order to enable valid inference on the association between the unobserved response and the covariates. Here, we show that the method proposed by \citet{angelopoulos2023prediction} successfully controls the type 1 error rate and provides  confidence intervals with correct nominal coverage, regardless of the quality of the pre-trained machine learning model used to predict the unobserved response. However, the method proposed by \citet{wang2020methods} provides valid inference only under very strong conditions that rarely hold in practice: for instance,  
 if the machine learning model perfectly estimates the true regression function in the study population of interest.
\end{abstract}

\section{Introduction}

Rapid recent progress in the field of machine learning has enabled the development of complex and high-quality machine learning models to predict a response variable of interest.
 This is particularly attractive in settings where future measurement of this response variable is prohibitively expensive or impossible.  For example, instead of performing expensive experiments to determine a protein's structure, it is possible to obtain high-quality structural predictions  using AlphaFold \citep{jumper2021highly}. Similarly, in developing countries, determining the true cause of death may be impossible; instead, one might predict the cause of death on the basis of a ``verbal autopsy" \citep{clark2015insilicova,khoury1999mortality}. In the context of gene expression data, it is infeasible to conduct experiments in every possible tissue type, and so instead a machine learning model can be applied to predict gene expression in a tissue type of interest \citep{ellis2018improving,gamazon2015gene,gusev2019transcriptome}.  

 In this paper, we will use the notation $\hat{f}: \mathcal{Z} \rightarrow \mathcal{Y}$ to denote a pre-trained machine learning model that maps from $\mathcal{Z}$, the space of the predictors $Z$, to $\mathcal{Y}$, the space of the response variable $Y$. We assume that the operating characteristics of $\hat{f}(\cdot)$ in the study population of interest are unknown to the end-user, and the data used to fit $\hat{f}(\cdot)$ are  unavailable. Thus, in what follows, we will treat $\hat{f}(\cdot)$ as a ``black box" function.

In important recent papers, \citet{wang2020methods}  and \citet{angelopoulos2023prediction}
consider the practice of \emph{prediction-based inference}\footnote{\citet{wang2020methods}   and \citet{angelopoulos2023prediction} refer to this practice as
\emph{post-prediction inference} and 
  \emph{prediction-powered inference}, respectively; to unify terminology, here we refer to it as \emph{prediction-based inference}.}: that is, of quantifying the association between the response $Y$ and some covariates $X$ using realizations not of $(Y, X)$, but rather, of  $(\hat{f}(Z), X)$.  Such an approach is attractive in cases where the association between $Y$ and $X$ is of interest, and both $\hat{f}(\cdot)$ and a large sample of realizations of predictors $Z$ and covariates $X$ are available, but realizations of $Y$ are expensive or otherwise unavailable. Throughout, we will use $Z$ to denote the \emph{predictors} in the machine learning model $\hat{f}(\cdot)$, and $X$ to denote the \emph{covariates} whose association with $Y$ is of interest.  In many settings, the covariate variables $X$ may be a subset of the predictor variables $Z$, or may be identical to $Z$, but this is not necessarily the case.

\cite{wang2020methods} point out that 
 a naive approach to prediction-based inference that simplistically interprets the association between $(\hat{f}(Z), X)$ as the association between $(Y, X)$ is problematic from a statistical perspective. For instance, regressing $\hat{f}(Z)$ onto $X$ using least squares does not lead to valid inference on the association between $Y$ and $X$:
  e.g. it leads to hypothesis tests that fail to control type 1 error, and confidence intervals that do not achieve the nominal coverage. See Box 1.

\begin{mybox}[floatplacement=!h,label={box:naive}]{\emph{Naive approach to prediction-based inference.} 
We are given a (pre-trained) prediction function $\hat{f}(\cdot): \mathcal{Z} \rightarrow \mathcal{Y}$, and an unlabeled dataset $\mathbf{z}_{\text{unlab}}$ representing realizations from $Z$. 
As pointed out by \citet{wang2020methods}, the naive approach  displayed here does not allow for valid inference on the association between $Y$ and $X$.}
\begin{enumerate}[start=1,label={Step \arabic*:},leftmargin=*]
\item Compute $\hat{\mathbf{y}}_{\text{unlab}} = \hat{f}(\mathbf{z}_{\text{unlab}})$. 
\item Conduct inference on the association between $Y$ and $X$, using  $\hat{\mathbf{y}}_{\text{unlab}}$ and $\mathbf{x}_{\text{unlab}}$ as data, without accounting for the fact that $(\hat{\mathbf{y}}_{\text{unlab}}, \mathbf{x}_{\text{unlab}})$ is not a sample from the distribution of $(Y, X)$.
  \end{enumerate}
\end{mybox}

\citet{wang2020methods} and \citet{angelopoulos2023prediction} propose creative solutions to overcome this issue. 
They assume that in addition to a large unlabeled dataset $(\mathbf{z}_{\text{unlab}}, \mathbf{x}_{\text{unlab}})$, they also have access to a (relatively small) labeled dataset 
$(\mathbf{y}_{\text{lab}}, \mathbf{z}_{\text{lab}}, \mathbf{x}_{\text{lab}})$. We focus on the case where the labeled and unlabeled data are independent and identically distributed samples from the same study population of interest, though \citet{angelopoulos2023prediction} also extend beyond this setting. See Box 2. We emphasize that the goal is to quantify association between $Y$ and $X$.

\begin{mybox}[floatplacement=!h,label={box:goal}]{\emph{Setting of prediction-based inference.}} 
\begin{itemize}[leftmargin=*]
\item \emph{Given:} A pre-trained machine learning model $\hat{f}: \mathcal{Z} \rightarrow \mathcal{Y}$. 
\item \emph{Goal:} To quantify the association between a response $Y \in \mathcal{Y}$ and covariates $X \in \mathcal{X}$. 
\item \emph{Data:} A (relatively small) labeled dataset $(\mathbf{y}_{\text{lab}}, \mathbf{z}_{\text{lab}}, \mathbf{x}_{\text{lab}})$ consisting of i.i.d. realizations of $(Y, Z, X)$,  and a (large) unlabeled dataset $(\mathbf{z}_{\text{unlab}}, \mathbf{x}_{\text{unlab}})$ consisting of i.i.d. realizations of $(Z, X)$. Both are drawn from the same study population. 
  \end{itemize}
\end{mybox}

One simple (and valid) option is to quantify association between $Y$ and $X$ using only the labeled data
$(\mathbf{y}_{\text{lab}}, \mathbf{x}_{\text{lab}})$. However, this approach  entirely discards the vast amount of unlabeled data $(\mathbf{z}_{\text{unlab}}, \mathbf{x}_{\text{unlab}})$. Intuitively, if the prediction function $\hat{f}(\cdot)$ is nearly perfect on our population of interest,  then using 
$(\hat{f}(\mathbf{z}_{\text{unlab}}), \mathbf{x}_{\text{unlab}})$ in addition to $(\mathbf{y}_{\text{lab}}, \mathbf{x}_{\text{lab}})$ will aid our efforts to quantify the association between $Y$ and $X$. By contrast, if $\hat{f}(Z)$ is a very poor prediction of $Y$ on our population of interest, then using $(\hat{f}(\mathbf{z}_{\text{unlab}}), \mathbf{x}_{\text{unlab}})$ in addition to $(\mathbf{y}_{\text{lab}}, \mathbf{x}_{\text{lab}})$  may hinder our efforts. 
Our goal is valid quantification of the association between $Y$ and $X$, \emph{regardless of the quality of $\hat{f}(\cdot)$ on the population of interest}.

To achieve this goal, \citet{wang2020methods} propose to (Step 1') model the association between $Y$ and $\hat{f}(Z)$ using the labeled dataset $(\mathbf{y}_{\text{lab}}, \hat f(\mathbf{z}_{\text{lab}}))$, and then (Step 2') incorporate the model in Step 1' to conduct  inference between $Y$ and $X$ using the unlabeled data $( \hat{f}(\mathbf{z}_{\text{unlab}}), \mathbf{x}_{\text{unlab}})$. See Box 3. %

\begin{mybox}[floatplacement=!h,label={box:box2}]{\emph{\citet{wang2020methods}'s proposal to correct prediction-based inference.} }
\begin{enumerate}[start=1,label={Step \arabic*':},leftmargin=*]
\item Model the association between  $Y$ and $\hat{f}(Z)$ using the labeled data $(\mathbf{y}_{\text{lab}}, \hat{f}(\mathbf{z}_{\text{lab}}))$.
\item Incorporate the model from Step 2' to conduct inference on the association between $Y$ and $X$ using the unlabeled data $(\hat{f}(\mathbf{z}_{\text{unlab}}), \mathbf{x}_{\text{unlab}})$. To do this, bootstrap and analytical approaches are proposed. 
  \end{enumerate}
\end{mybox}

By contrast, \citet{angelopoulos2023prediction}  propose to de-bias the estimates obtained using the unlabeled data using information from the labeled data. In the case of estimands that are linear in $Y$, they (Step 1'') compute the difference between the  estimate of the parameter of interest obtained using 
$(\hat{f}(\mathbf{z}_{\text{lab}}), \mathbf{x}_{\text{lab}})$ and the estimate obtained using $(\mathbf{y}_{\text{lab}}, \mathbf{x}_{\text{lab}})$. 
 They then (Step 2'') correct the estimate obtained using the unlabeled dataset $(\hat{f}(\mathbf{z}_{\text{unlab}}), \mathbf{x}_{\text{unlab}})$   by this amount. See Box 4. \citet{angelopoulos2023prediction} also propose a more general framework for estimands that minimize the expectation of a general loss function, though we focus on the linear case in this paper for simplicity. 

\begin{mybox}[floatplacement=!h,label={box:box2}]{\emph{\citet{angelopoulos2023prediction}'s proposal to correct prediction-based inference (in the special case of estimands that are linear in $Y$).}}
\begin{enumerate}[start=1,label={Step \arabic*'':},leftmargin=*]
\item Compute the difference between the  estimate of the parameter of interest obtained using 
$(\hat{f}(\mathbf{z}_{\text{lab}}), \mathbf{x}_{\text{lab}})$ and the estimate obtained using $(\mathbf{y}_{\text{lab}}, \mathbf{x}_{\text{lab}})$. 
\item Correct the parameter estimate obtained using the unlabeled dataset $(\hat{f}(\mathbf{z}_{\text{unlab}}), \mathbf{x}_{\text{unlab}})$   by 
 the difference computed in Step 1''.
  \end{enumerate}
\end{mybox}

In this paper, we investigate these two proposals. In Section 2, we ask a fundamental question: what parameter is each proposal targeting? We see that the proposal of \citet{angelopoulos2023prediction} targets the parameter of interest, whereas that of \citet{wang2020methods} does not. In Sections 3 and 4, we investigate the empirical consequences of our findings from Section 2. These empirical investigations paint a clear picture: namely, that failure to target the correct parameter has substantial statistical consequences for the proposal of \citet{wang2020methods}, in the form of hypothesis tests that fail to control the Type 1 error, and confidence intervals that fail to attain the nominal coverage. The proposal of \citet{angelopoulos2023prediction} does not suffer these consequences, as it targets the correct parameter. We close with a discussion in Section 5. 

 In this paper, we use capitals to represent a random variable and lower case to represent its realization. Vectors of length equal to the number of observations, or matrices whose rows correspond to the observations, are in bold.

 \section{What parameter is each method targeting?}

For concreteness, suppose that we would have fit a linear regression model on realizations of $(Y, X)$ using least squares, had a large number of realizations of $(Y, X)$ been available. Therefore, our goal is to conduct inference on the population parameter
 \begin{equation}
 \beta^* = \argmin_{\beta} \E[(Y - \beta^\top X)^2] = \E[XX^\top]^{-1}\E[XY]. %
 \label{eq:target}
 \end{equation}

The naive method (Box 1) and the proposals of \citet{wang2020methods} and \citet{angelopoulos2023prediction} use a test statistic of the form $$T_j = \frac{\hat \beta_j - \beta^*_j}{\widehat{\SE}(\hat\beta_j)}$$ for testing or constructing confidence intervals for $\beta^*_j$. They rely on it having a known distribution, or converging in distribution to a known distribution with increasing sample size. 

However, if $\hat \beta_j \overset{p}{\not \to} \beta^*_j$ and $\widehat{\SE}(\hat\beta_j)$ goes to $0$ as $n_{\text{lab}}$ and $n_{\text{unlab}}$ increase, then $T_j$ does not converge in distribution (see Appendix A for a formal statement of this). Therefore, consistency of $\hat \beta_j$ for $\beta^*_j$ is a necessary condition for valid inference using this approach. We now investigate whether this is the case.
 
 \subsection{The general case for an arbitrary prediction model $\hat f$} \label{sec:general}

We first consider the naive approach, as defined in Box 1. Fitting a linear model with least squares would result in $\hat \beta_{\text{naive}} = (\mathbf{x}_{\text{unlab}}^\top \mathbf{x}_{\text{unlab}})^{-1}\mathbf{x}_{\text{unlab}}^\top \hat f(\mathbf{z}_{\text{unlab}})$. We can see that $\hat \beta_{\text{naive}} \overset{p}{\to} \E[XX^\top]^{-1}\E[X\hat{f}(Z)]$ as $n_{\text{unlab}}$ increases, which is \emph{not} equal to  $\beta^* = \E[XX^\top]^{-1}\E[XY]$ in general. In fact, viewing $\hat{f}(\cdot)$ as a black-box function with unknown operating characteristics in the study population, we see that  the  quantity $\E[XX^\top]^{-1}\E[X\hat{f}(Z)]$  does not even involve the response, $Y$; therefore, the parameter targeted by the naive method is not of any scientific interest.

\begin{algorithm}[t]\caption{Bootstrap  correction of \citet{wang2020methods}. {The goal is to conduct inference on the association between $Y$ and $X$}.}\label{alg:original}
\setstretch{1.15}
\textcolor{white}{X} 1. Use $(\mathbf{y}_{\text{lab}}, \hat f(\mathbf{z}_{\text{lab}}))$ to fit the ``relationship model" $Y | \hat{f}(Z) \sim K(\hat{f}(Z), \phi)$, yielding $\hat{\phi}$. \\ 
\textcolor{white}{X} 2. For $b=1,\ldots,B$: \\
\textcolor{white}{XX} 2.1. Sample unlabeled observations with replacement to obtain $\mathbf{z}_{\text{unlab}}^b$ and $\mathbf{x}_{\text{unlab}}^b$. \\
\textcolor{white}{XX} 2.2. Sample outcomes $\tilde{\mathbf{y}}^b | \hat{f}(\mathbf{z}_{\text{unlab}}^b)$ from the relationship model $K(\hat{f}(\mathbf{z}_{\text{unlab}}^b), \hat\phi)$. \\
\textcolor{white}{XX} 2.3. Use $(\tilde{\mathbf{y}}^b, \mathbf{x}_{\text{unlab}}^b)$ to fit a ``regression model" for the relationship between $Y$ and $X$, and record the coefficient estimate  $\hat{\beta}^b$ and model-based standard error $\hat{s}^b$.\\
\textcolor{white}{X} 3. Compute the point estimate $\hat{\beta} = \median \{\hat{\beta}^1, \dots, \hat{\beta}^B\}$. \\
\textcolor{white}{X} 4. Compute the  ``nonparametric'' standard error $\widehat\se(\hat{\beta}) = \sd \{\hat{\beta}^1, \dots, \hat{\beta}^B\}$. \\
\textcolor{white}{X} 5. Compute the  ``parametric'' standard error $\widehat\se(\hat{\beta}) = \median \{\hat{s}^1, \dots, \hat{s}^B\}$.
\end{algorithm}

Now, we consider the bootstrap variant of the proposal of \citet{wang2020methods}, which is introduced in Box 3. A detailed description of these proposals are presented in Algorithm 1. We take Step 2.3 to involve a least squares regression. Note that Step 2.2 of Algorithm~\ref{alg:original} involves sampling observations $\tilde{\mathbf{y}}^b$ from $K(\hat{f}(Z), \hat\phi)$ for use in fitting a regression model in Step 2.3, giving us an estimate $\hat \beta_{\text{Wang}}^{\text{bootstrap}, b} = (\mathbf{x}_{\text{unlab}}^\top \mathbf{x}_{\text{unlab}})^{-1}\mathbf{x}_{\text{unlab}}^\top \tilde{\mathbf{y}}^b$. Thus $\hat \beta_{\text{Wang}}^{\text{bootstrap}, b} \overset{p}{\to} \E[XX^\top]^{-1}\E[XK(\hat{f}(Z), \hat\phi)]$ as $n_{\text{unlab}}$ increases, where we slightly abuse notation by letting $K(\hat{f}(Z), \hat\phi)$ denote a random variable with distribution $K(\hat{f}(Z), \hat\phi)$.  In general, $\E[XX^\top]^{-1}\E[XK(\hat{f}(Z), \hat\phi)]$ is not equal to the parameter of interest $\beta^* = \E[XX^\top]^{-1}\E[XY]$. Note that both the ``parametric'' and ``non-parametric'' bootstrap corrections suffer from this issue. 

The analytic variant of the proposal of \citet{wang2020methods} adjusts the naive estimate by the coefficient of $\hat{f}(Z)$ in a regression of $Y$ onto $\hat{f}(Z)$, with an intercept, using the labeled data\footnote{The expression for $\hat \beta_{\text{Wang}}^{\text{analytical}}$ given in (\ref{eq:wanganalytical}) is implemented in \citet{wang2020methods}'s code. Their publication involves a slightly different expression for $\hat \beta_{\text{Wang}}^{\text{analytical}}$, which also does not converge to the parameter of interest $\beta^*$ for a similar reason -- we show this in Appendix B.}; that is, the estimate takes the form
\begin{equation}\label{eq:wanganalytical}
\hat \beta_{\text{Wang}}^{\text{analytical}} = \frac{\widehat{\Cov}(\mathbf{y}_{\text{lab}}, \hat{f}(\mathbf{z}_{\text{lab}}))}{\widehat{\Var}(\hat{f}(\mathbf{z}_{\text{lab}}))} \hat \beta_{\text{naive}}.
\end{equation}
Thus as  $n_{\text{lab}}$ and $n_{\text{unlab}}$ increase, 
$$
\hat \beta_{\text{Wang}}^{\text{analytical}} \overset{p}{\to} \frac{{\Cov}(Y, \hat{f}(Z))}{{\Var}(\hat{f}(Z))} \E[XX^\top]^{-1}\E[X\hat{f}(Z)],
$$
which is not again not equal to $\beta^*$ in general. 

This highlights a cause for concern about the proposals of \citet{wang2020methods}: namely, that the wrong parameter is being targeted. This calls into question whether the inference that they propose will achieve the desired statistical guarantees; we investigate this issue further in the next two sections.

 Finally, we turn to the proposal of \citet{angelopoulos2023prediction}, which is introduced in Box 4. In the case of linear regression, their estimate takes the form 
 $$
\hat \beta_{\text{Angelopoulos}} = (\mathbf{x}_{\text{unlab}}^\top \mathbf{x}_{\text{unlab}})^{-1}\mathbf{x}^\top_{\text{unlab}}\hat f(\mathbf{z}_{\text{unlab}}) + \left\{(\mathbf{x}_{\text{lab}}^\top \mathbf{x}_{\text{lab}})^{-1}\mathbf{x}^\top_{\text{lab}} \mathbf{y}_{\text{lab}} - (\mathbf{x}_{\text{lab}}^\top \mathbf{x}_{\text{lab}})^{-1}\mathbf{x}^\top_{\text{lab}}\hat f(\mathbf{z}_{\text{lab}})\right\}.
 $$
 As $n_{\text{lab}}$ and $n_{\text{unlab}}$ increase, we see that
  $$
\hat \beta_{\text{Angelopoulos}} \overset{p}{\to} \E[XX^\top]^{-1}\E[X\hat{f}(Z)] + \left\{\E[XX^\top]^{-1}\E[XY] - \E[XX^\top]^{-1}\E[X\hat{f}(Z)]\right\} = \beta^*,
 $$
so the proposal of \citet{angelopoulos2023prediction} correctly targets the desired quantity.

 \subsection{An extreme setting where all methods target the correct quantity} \label{subsec:good}

We now consider an extreme setting where the prediction model \emph{exactly} equals the true regression function: that is,   $\hat{f}(z) = \E[Y | Z = z]$. We also assume that $X$ is contained within $Z$: that is,  $Z = (X, Z^{(2)})$ for some $Z^{(2)}$. This is a reasonable assumption in practice, since the covariate of interest is likely also a predictor in the machine learning model. In this extreme setting, the naive method targets 
\begin{align*}
\E[XX^\top]^{-1}\E[X\hat{f}(Z)] &= \E[XX^\top]^{-1}\E \left[ X\E[Y|Z] \right] \\
&=\E[XX^\top]^{-1}\E[XY] \\
&= \beta^*.\end{align*}
In other words, it  targets the correct parameter of interest. 

We will now show that the bootstrap methods of \citet{wang2020methods} can similarly target the correct parameter in this extreme setting, for example, if $K(\hat{f}(Z), \hat \phi)$ is defined by fitting a  generalized additive model (GAM)
 to $(Y, \hat{f}(Z) )$ and adding mean-zero noise, as in \citet{wang2020methods}. That is,  
 \begin{equation}
 K(\hat{f}(Z), \hat \phi)\overset{d}{=}\hat{g}(\hat{f}(Z)) + \epsilon,
 \label{eq:k}
 \end{equation}
  where $\epsilon$ is mean-zero noise and $\hat{g}(\cdot)$ is the fitted GAM.

The fitted GAM $\hat{g}(\cdot)$ takes the form
 \begin{align}
\hat g(\cdot) & \approx \argmin_g \E\left[ (Y - g(\hat f(Z)))^2 \right] \label{eq:largesample} \\
& = \argmin_g \E \left[  \left(Y - g(\E[Y | Z]) \right)^2 \right] \label{eq:4} \\
& = \argmin_g \E\left[ \E\left[ \left(Y - g(\E[Y | Z]) \right)^2 | Z \right] \right] \label{eq:5}.
\end{align}
 The approximation in \eqref{eq:largesample} holds if the labeled sample size is sufficiently large. Equation \ref{eq:4} is a consequence of the extreme assumption that $\hat{f}(z) = \E[Y | Z = z]$. Equation \ref{eq:5} follows from iterated expectations.
 
 It is not hard to see that \eqref{eq:5} is minimized when $g(\E[Y | Z]) = \E[Y | Z]$, i.e.,  $\hat g(\cdot)$ is approximately the identity function. 
 Combining this with the extreme assumption that  $\hat{f}(z) = \E[Y | Z = z]$,
 \eqref{eq:k} leads to 
  \begin{equation}
 K(\hat{f}(Z), \hat \phi) \overset{d}{\approx} \E[Y | Z] + \epsilon.
 \label{eq:k2}
 \end{equation}

Now, recall from Section~\ref{sec:general} that the parameter targeted by \cite{wang2020methods} is\\
  $\E[XX^\top]^{-1}\E[XK(\hat{f}(Z), \hat \phi)]  $.
   We observe that
  \begin{align}\E[XX^\top]^{-1}\E[XK(\hat{f}(Z), \hat \phi)] &= \E[XX^\top]^{-1}\E[X\E[K(\hat{f}(Z), \hat \phi)|Z]] \label{eq:1}\\
  & \approx \E[XX^\top]^{-1}\E[X\E[Y | Z]] \label{eq:2}\\
  &= \E[XX^\top]^{-1}\E[XY] \label{eq:3}\\
  &= \beta^*.\nonumber
  \end{align}
  Here, \eqref{eq:1} and \eqref{eq:3} follow from iterated expectations since $Z=(X, Z^{(2)})$, and \eqref{eq:2} follows from \eqref{eq:k2}. 
With a large enough labeled dataset, this approximation will hold almost exactly. 

The analytical method of Wang et al. also targets the correct parameter in this setting, since the naive method targets the correct parameter and $$\frac{{\Cov}(Y, \hat{f}(Z))}{{\Var}(\hat{f}(Z))} = \frac{{\Cov}(Y, \E[Y | Z])}{{\Var}(\E[Y | Z])} = \frac{{\Var}(\E[Y | Z])}{{\Var}(\E[Y | Z])} = 1.$$

Therefore, we have seen that under a very extreme assumption that  $\hat{f}(z) = \E[Y | Z = z]$, the methods of \cite{wang2020methods} will target (nearly) the correct parameter. 
 However, in general, this assumption is not reasonable. And in fact, our goal is valid inference on the parameter $\beta^*$ regardless of (the quality of) $\hat{f}(\cdot)$.

 \section{An empirical investigation of the distribution of the test statistic}\label{subsec:se}

We consider a simple simulation setting, inspired by the  \textit{``Simulated Data: Continuous case"} section of \cite{wang2020methods}. They generate three datasets: a \emph{training dataset} consisting of realizations of $(Z,X,Y)$ used to train a machine learning model $\hat{f}(\cdot)$, a \emph{labeled dataset} consisting of realizations of $(Z,X,Y)$, and an \emph{unlabeled dataset} consisting only of realizations of $(Z,X)$; both the labeled and unlabeled datasets are used for inference\footnote{\cite{wang2020methods} refer to the labeled data set as  ``test" data, and to the unlabeled data as ``validation" data.}. They consider predictors $Z \in \mathbb{R}^{4}$ and response $Y \in \mathbb{R}$, and define the covariate $X \equiv Z_1$.  In \cite{wang2020methods}'s paper, the 
 training, labeled, and unlabeled datasets each consist of $300$ observations. Throughout this section, we keep the training sample size fixed at $300$ observations, but vary the size of the  labeled and unlabeled datasets. 

As in \cite{wang2020methods}, we generate the training, labeled, and unlabeled datasets from the same partially linear additive model $Y = \tilde \beta_0 + \tilde \beta_1 Z_1 + \sum_{j=2}^4 \tilde \beta_j g_j(Z_j) + \epsilon$. Their goal is to  conduct inference on the marginal association between $X = Z_1$ and $Y$ in a linear model. That is, their parameter of interest is  
     \begin{equation}
 \beta_1^* = \argmin_{\beta_1} \min_{\beta_0} \E[(Y - \beta_0 - \beta_1 Z_1)^2]. %
 \label{eq:target}
 \end{equation}
 Because the features are independent, we have that
   \begin{equation}
 \beta_1^*  = \argmin_{\beta_1} \min_{\beta_0, \beta_2, \beta_3, \beta_4} \E[(Y - \beta_0 - \beta_1 Z_1 - \sum_{j=2}^4 \beta_j g_j(Z_j))^2] = \tilde \beta_1.
 \label{eq:target2}
 \end{equation}
 Thus,  $\beta_1^*$ is the marginal regression coefficient of $Y$ onto $X = Z_1$, as well as the coefficient associated with $X = Z_1$ in the partially linear additive model used to generate the data. We consider two settings: one under the null ($\beta_1^* = 0$) and one under the alternative ($\beta_1^* = 1$). 

 We generate 3 training sets and fit a GAM to each training set, to obtain three fitted models $\hat f_1, \hat f_2, \hat f_3$. In each replicate of the simulation study, we generate a new labeled and unlabeled dataset as described above. Note that this differs from the simulation in \citet{wang2020methods}, which generates a new training set (and thus a different $\hat f$) in each replicate of the simulation study. We do this to focus on the properties of estimation and inference under a fixed $\hat f$, e.g. how AlphaFold would be used in practice. We perform a total of 1,000 simulation replicates.
 
To  conduct prediction-based inference on $\beta_1^*$, both \citet{wang2020methods} and \citet{angelopoulos2023prediction} rely on the claim that $(\hat{\beta}_1 - \beta_1^*) / \widehat{\se}(\hat{\beta}_1) \overset{d}{\to} N(0, 1)$. We  consider the following versions of \citet{wang2020methods}: 
\begin{enumerate}
\item \emph{Proposal of \cite{wang2020methods}, with an analytical correction.}  Apply Box 3 with the ``analytical correction" (\ref{eq:wanganalytical}) using a linear model for the regression model and a linear model for the relationship model.
\item \emph{Proposal of \cite{wang2020methods}, with a ``parametric bootstrap'' correction.} Apply Box 3 with the  ``parametric bootstrap'' correction presented in Algorithm \ref{alg:original} using a linear model for the regression model and a GAM for the relationship model.%
\item  \emph{Proposal of \cite{wang2020methods}, with a ``non-parametric bootstrap'' correction.} Apply Box 3 with the  ``non-parametric bootstrap'' correction presented in Algorithm \ref{alg:original} using a linear model for the regression model and a GAM for the relationship model.%

\end{enumerate}
We additionally consider the proposal of \citet{angelopoulos2023prediction}:
\begin{enumerate}
  \setcounter{enumi}{3}
\item  \emph{Proposal of \citet{angelopoulos2023prediction}.} Apply Box 4 using a linear model. %
\end{enumerate}
Finally, we consider the following two approaches.
\begin{enumerate}
\setcounter{enumi}{4}
\item \emph{Classical approach using only the labeled data.} Fit a linear model to $(\mathbf{y}_{\text{lab}}, \mathbf{x}_{\text{lab}})$.
\item \emph{Naive approach.} Apply Box 1 using a linear model.
\end{enumerate}

\begin{figure}[!h]
\begin{center}
\includegraphics[width=\textwidth,page=2]{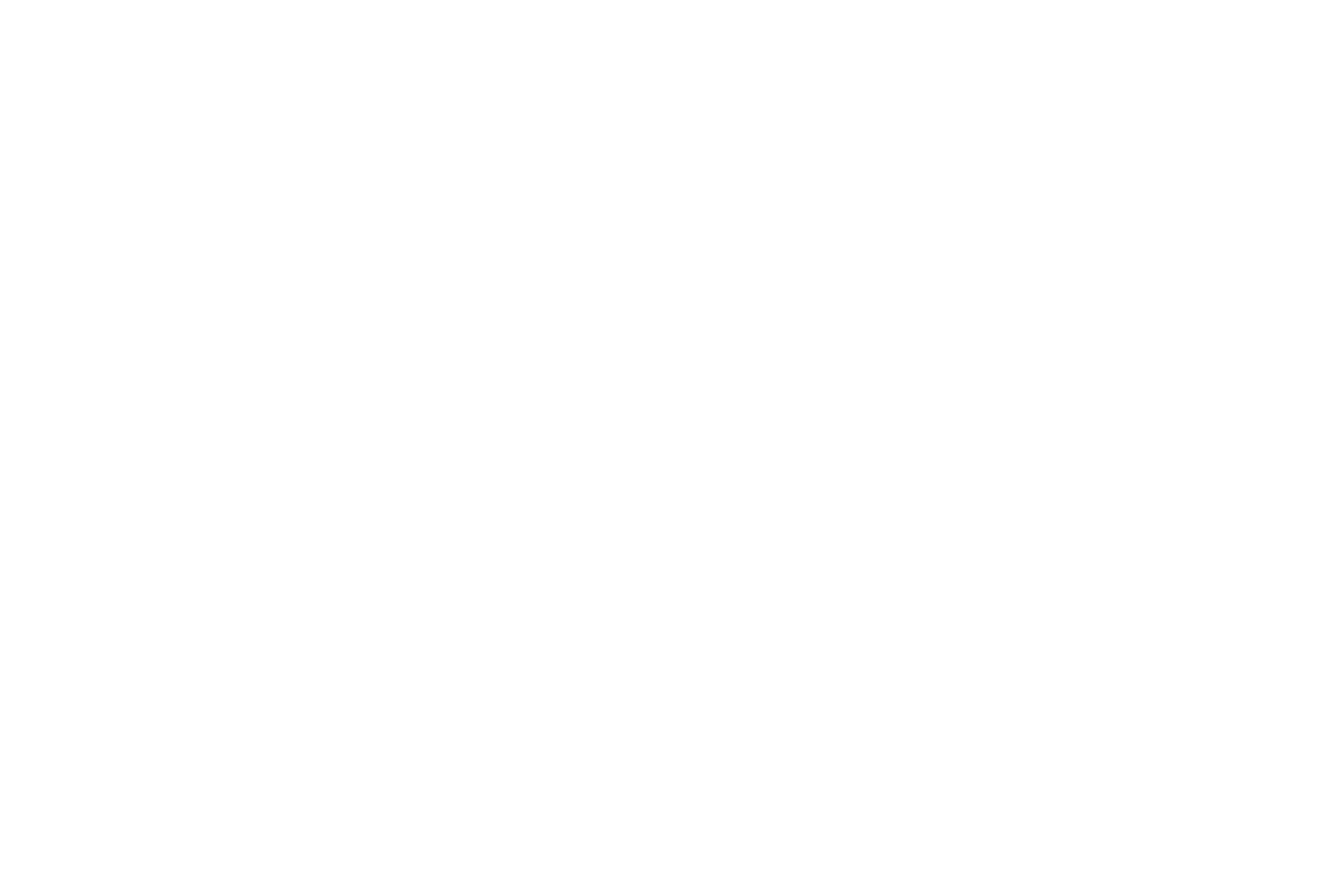}
\end{center}
\vspace{-25pt}
\caption{An examination of the distribution of $\hat{\beta}_1 / \widehat{\se}(\hat{\beta}_1)$ under $H_0: \beta_1^* = 0$. For each of four different prediction models $\hat{f}(\cdot)$ (three trained GAMs and one true regression function), we display the empirical distribution of 
$\hat{\beta}_1 / \widehat{\se}(\hat{\beta}_1)$ as  the sample sizes increase, with $n_{\text{lab}} = 0.1 n_{\text{unlab}}$. The $N(0,1)$ distribution  is shown in black. The dashed black lines show the $0.025$ and $0.975$ quantiles of this distribution. 
  The distributions of \cite{wang2020methods}'s test statistics  increasingly diverge from the $N(0,1)$ distribution as the sample sizes increase. The methods and simulation setup are described in Section 3. \label{fig:dist}}
\end{figure}
\begin{figure}[!h]
\begin{center}
\includegraphics[width=\textwidth,page=2]{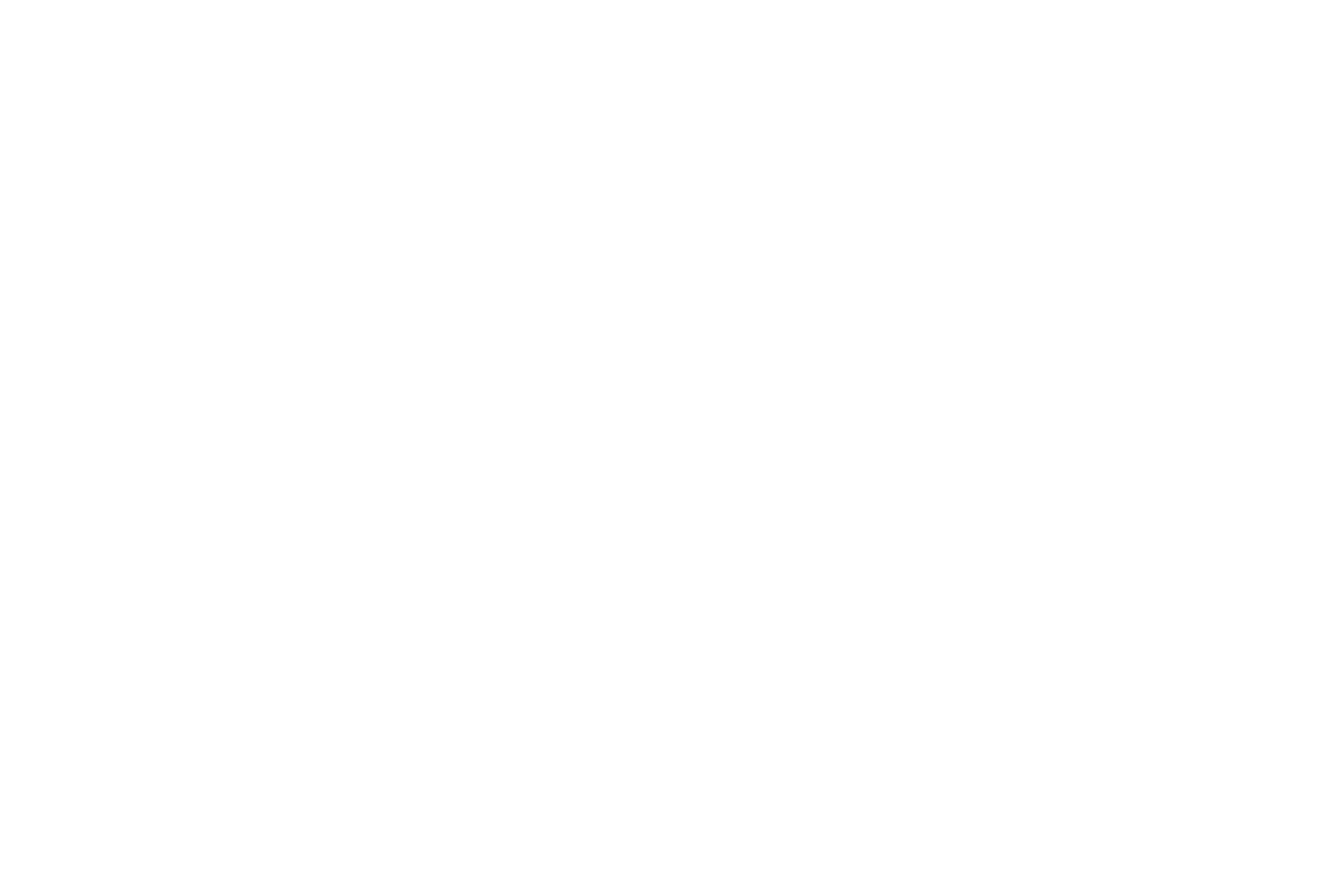}
\end{center}
\vspace{-25pt}
\caption{
An examination of the distribution of $(\hat{\beta}_1 - \beta_1^*) / \widehat{\se}(\hat{\beta}_1)$ when  $\beta_1^* = 1$. For each of four different prediction models $\hat{f}(\cdot)$ (three trained GAMs and one true regression function), we display the empirical distribution of 
$(\hat{\beta}_1 - \beta_1^*) / \widehat{\se}(\hat{\beta}_1)$ as  the sample sizes increase, with $n_{\text{lab}} = 0.1 n_{\text{unlab}}$. The $N(0,1)$ distribution  is shown in black. The dashed black lines show the $0.025$ and $0.975$ quantiles of this distribution. 
  The distributions  of  \cite{wang2020methods}'s test statistics increasingly diverge from the $N(0,1)$ distribution as the sample sizes increase. The methods and simulation setup are described in Section 3. 
  \label{fig:distalt}}
\end{figure}

In Figure \ref{fig:dist} we show the empirical distribution of $(\hat{\beta}_1 - \beta_1^*)/ \widehat{\se}(\hat{\beta}_1)$ under $H_0: \beta_1^* = 0$, for increasing sample sizes. In the first three panels, we can see that \emph{the asymptotic distribution of this test statistic for an arbitrary $\hat f$ does not follow a $N(0, 1)$ for the naive and \citet{wang2020methods} methods}. This is in line with our findings in Section 2. This is also true under the alternative $H_0: \beta_1^* = 1$, as shown in Figure \ref{fig:distalt}. On the other hand, for the method of \citet{angelopoulos2023prediction}, this statistic converges in distribution to a $N(0, 1)$ regardless of the choice of $\hat f$.

In the last panel of Figures \ref{fig:dist} and \ref{fig:distalt}, we consider the extreme setting considered in Section 2.2, in which $\hat{f}(z) = \E[Y | Z = z]$. Here, the empirical distribution of the test statistic is approximately $N(0, 1)$ for all methods.

We next performed testing and constructed confidence intervals under the assumption made by \citet{wang2020methods} and \citet{angelopoulos2023prediction} that the test statistic asymptotically follows $N(0, 1)$. We examined how the violation of this distributional assumption impacts type 1 error control and coverage in Figures \ref{fig:type1errorfixed} and \ref{fig:coveragefixed} in the Appendix. We see that \emph{the naive and \citet{wang2020methods} methods do not control the type 1 error rate or have nominal coverage for an arbitrary $\hat f$}, and they become increasingly anti-conservative as the sample sizes increase. This can be explained by the increasing discrepancy between the assumed and true distributions of  $(\hat{\beta}_1 - \beta_1^*)/ \widehat{\se}(\hat{\beta}_1)$ as the sample sizes increase, as previously seen in Figures \ref{fig:dist} and \ref{fig:distalt}. In fact, we can directly read off the type 1 error rate at level 0.05 as the  proportion of points falling outside the dashed lines in Figure 1. Similarly, coverage can be read as the proportion of points inside the dashed lines in Figure 2. 

Because the true distribution of the test statistic matches the assumed one for the method of \citet{angelopoulos2023prediction} for any $\hat f$, this method gives correct type 1 error control and coverage in general. For the same reason, under the extreme setting, all methods have correct  type 1 error control and coverage.

\section{A direct replication of the simulation study of \cite{wang2020methods}} \label{sec:simulations}

In the previous section, we considered a simulation setting that was very similar to that in \cite{wang2020methods}, but differed in that we considered the same prediction models $\hat f$ across all simulation replicates. In this section, we instead directly replicate their simulation setting (``Simulated data; continuous case''), by generating a new training dataset in each simulation replicate (resulting in a different $\hat f$ in each replicate). They considered a sample size of 300 for the training, labeled, and unlabeled datasets. In addition to replicating these results, we explore increasing the labeled and unlabeled dataset sizes.

\begin{figure}[!h]
\begin{center}
\includegraphics[width=\textwidth]{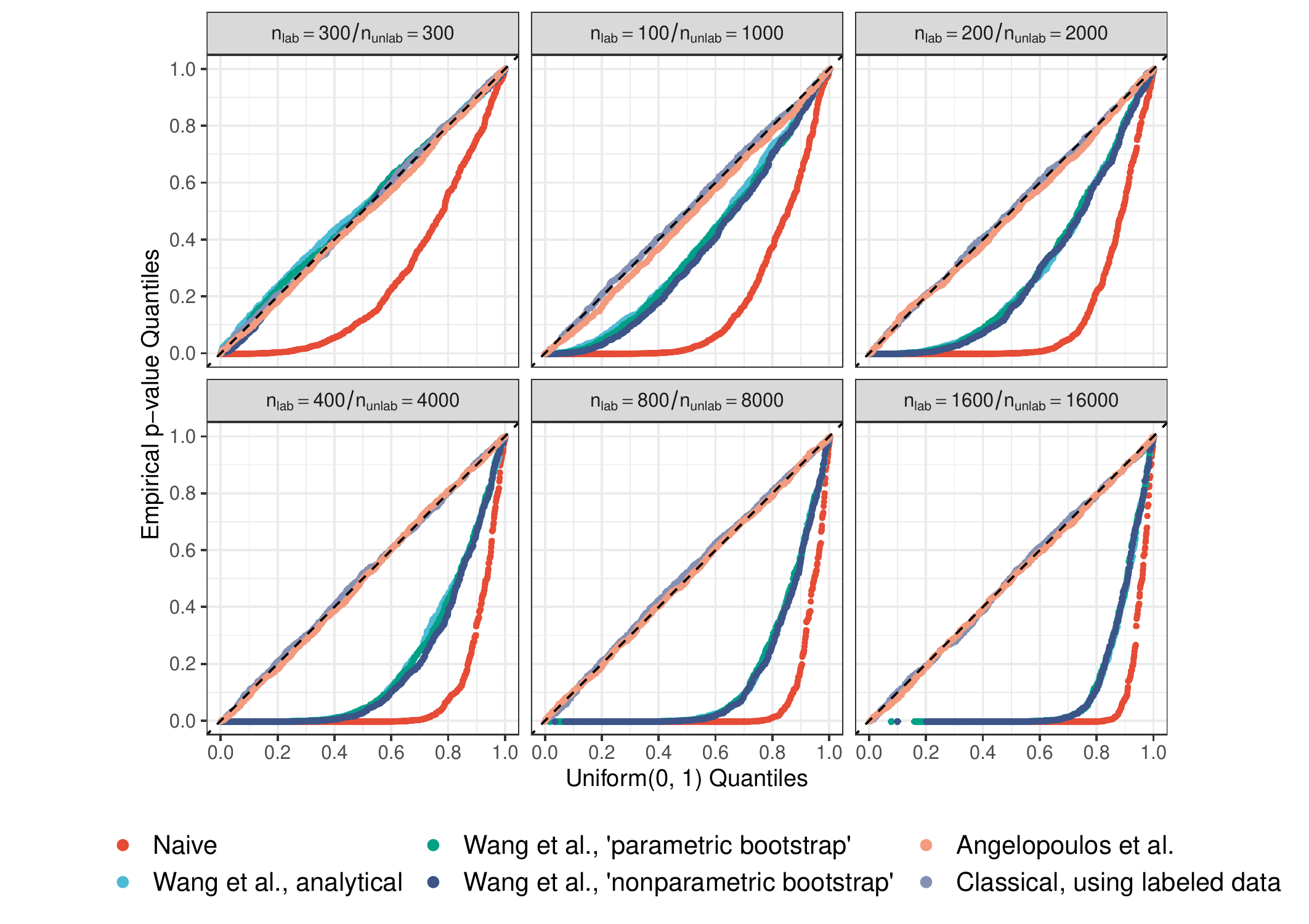}
\end{center}
\vspace{-25pt}
\caption{For data generated under $H_0: \beta_1^* = 0$, quantile-quantile plots of the p-values across simulation replicates are displayed. The methods are described in Section 3 and the simulation setup is described in Section 4.  Each panel corresponds to a different sample sizes of the labeled and unlabeled datasets used for inference. 
  The bootstrap and analytical corrections considered by \cite{wang2020methods} become increasingly anticonservative as the sample sizes increase.  The classical approach, and that of \cite{angelopoulos2023prediction}, are well-calibrated. 
\label{fig:type1error}}
\end{figure}

We first examine the type 1 error rate under the null hypothesis $H_0: \beta_1^* = 0$ using each of the approaches described in Section~\ref{subsec:se}. Quantile-quantile plots of the resulting $1,000$ p-values are shown in Figure \ref{fig:type1error}. We see that in agreement with the findings in \citet{wang2020methods}, the naive approach does not control the type 1 error rate regardless of sample size. While it appears that when $n_{\text{lab}} = n_{\text{unlab}} = 300$ (first panel of Figure \ref{fig:type1error}) the methods of \citet{wang2020methods} have uniform p-values under the null (as also reported in \citet{wang2020methods}), with other sample sizes this no longer holds and \emph{the methods of \citet{wang2020methods} fail to control the type 1 error rate}. As expected based on the previous section, \citet{angelopoulos2023prediction} controls type 1 error.  This finding is in agreement with the theoretical results in \citet{angelopoulos2023prediction}, which hold for an arbitrary prediction function $\hat{f}(\cdot)$. Also as expected, the classical method controls type 1 error.

\begin{figure}[!h]
\begin{center}
\includegraphics[width=\textwidth]{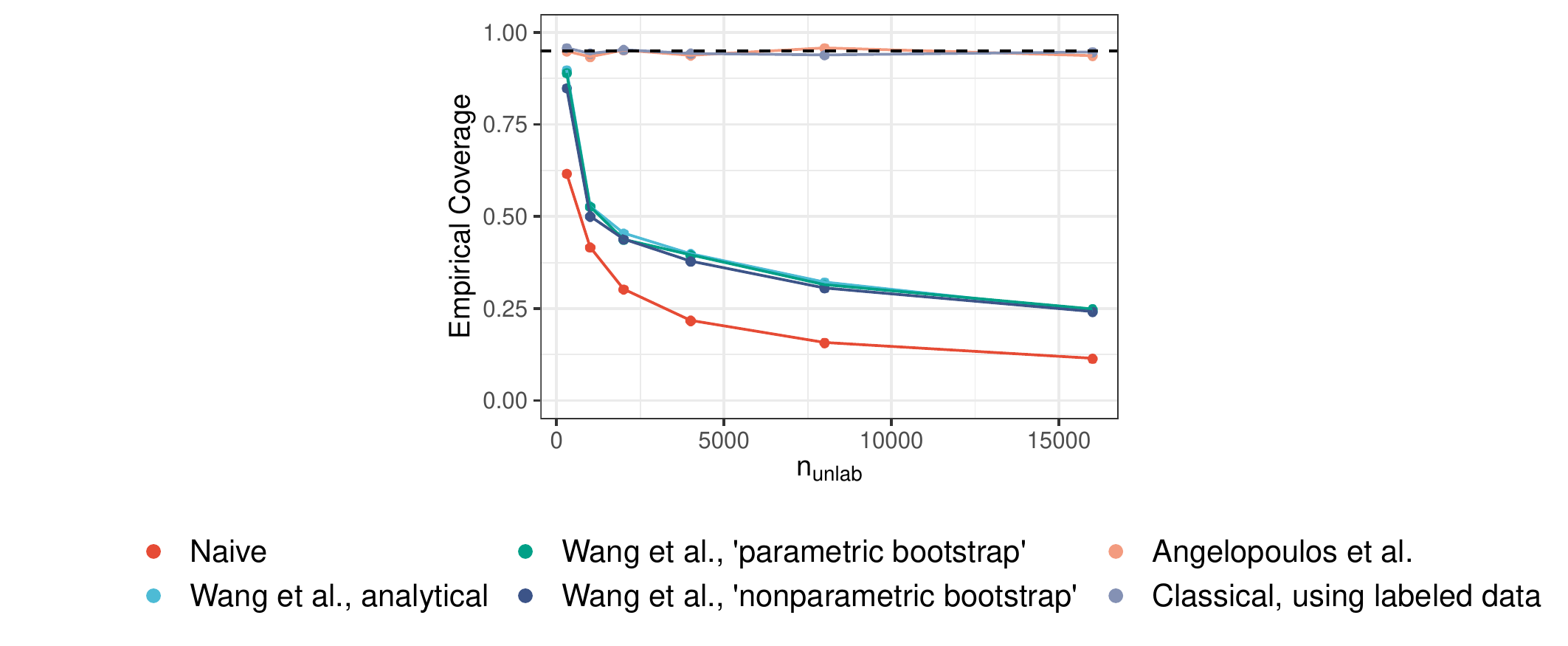}
\end{center}
\vspace{-25pt}
\caption{For data generated with $\beta_1^* = 1$, empirical coverage of 95\% confidence intervals for each method across each simulation replicate, as the labeled and unlabeled sample sizes increase, with $n_{\text{lab}} = 0.1 n_{\text{unlab}}$. The methods are described in Section 3 and the simulation setup is described in Section 4. \label{fig:coverage}}
\end{figure}

Next we examine coverage of 95\% confidence intervals under $\beta_1^* = 1$. We see from  Figure \ref{fig:coverage} that the naive approach has coverage well below the nominal level. Again, \emph{the ``corrected" proposals of \citet{wang2020methods} also fail to achieve the nominal coverage}. The problem becomes increasingly pronounced as the sample sizes of the data used for inference increase. By contrast, the classical method and the proposal of  \citet{angelopoulos2023prediction}  do achieve the nominal coverage, supporting  the theory of \citet{angelopoulos2023prediction}.

As explained in Section 2, the lack of inferential guarantees for the naive approach and the approach of \citet{wang2020methods} is a direct consequence of the fact that $\hat \beta_1 \overset{p}{\not \to} \beta^*_1$.

\section{Discussion} \label{sec:discussion}

In this paper, we found that the method of \citet{angelopoulos2023prediction} provides valid type 1 error control and coverage. 
By contrast, the methods proposed by \citet{wang2020methods} do not provide appropriate inferential guarantees in the absence of very strong assumptions: for instance, under the extreme (and unrealistic) scenario where the prediction model is the true regression function.  \emph{Under this additional assumption, the naive approach also provides valid inference.} Furthermore, we see in our simulation study that simply assuming that the prediction model $\hat f(\cdot)$ was trained on data from the same population as the labeled and unlabeled data --- an assumption made by \citet{wang2020methods} --- is not sufficient to achieve valid inference using their proposed methods.

Throughout, for simplicity we have assumed that we are conducting inference (i.e. Step 2 of Box 1) using a linear model.
However, our conclusions --- that the naive method and methods of \citet{wang2020methods} result in invalid inference, because they target the incorrect parameter --- apply much more generally. The theory in \citet{angelopoulos2023prediction} shows that their approach applies to a wide variety of settings beyond linear regression, and has valid inferential properties. 

\section*{Code Availability}
Scripts to reproduce the results in this manuscript are available at \url{https://github.com/keshav-motwani/PredictionBasedInference/}. Our code is based on the code from \citet{wang2020methods}; we thank the authors for making it publicly accessible. 

\section*{Acknowledgements}
We thank Jeff Leek and Tyler McCormick for helpful conversations. The authors gratefully acknowledge funding from \textit{NIH R01 EB026908, NIH R01 GM123993, NIH R01 DA047869, ONR N00014-23-1-2589}, a \textit{Simons Investigator Award in Mathematical Modeling of Living Systems}, and the \textit{NSF Graduate Research Fellowship Program}.

\clearpage

\bibliography{postpi_bib}
\clearpage
\begin{appendix} 

\section{Necessity of consistency for $\beta^*$}

We formally state the necessity of targeting the correct parameter in order to use the test statistic $(\hat \beta_j - \beta^*_j)/\widehat{\SE}(\hat\beta_j)$  for inference.
 \begin{lemma}
Suppose $\widehat{\SE}(\hat\beta_j) = o_p(1)$ and $\hat \beta_j \overset{p}{\not \to} \beta^*_j$. Then $(\hat \beta_j - \beta^*_j)/\widehat{\SE}(\hat\beta_j)$ does not converge in distribution.
 \end{lemma}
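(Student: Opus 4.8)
The plan is to argue by contradiction, reducing the claim to the standard fact that a sequence converging in distribution to a proper law must be tight. The guiding intuition is that, since the denominator $\widehat{\SE}(\hat\beta_j)$ vanishes in probability while the numerator $\hat\beta_j - \beta^*_j$ stays bounded away from $0$ with non-negligible probability, the ratio $|T_j|$, where $T_j := (\hat\beta_j - \beta^*_j)/\widehat{\SE}(\hat\beta_j)$, must escape to $+\infty$ with non-negligible probability --- behavior that is incompatible with convergence in distribution.

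First I would unpack the hypothesis $\hat\beta_j \overset{p}{\not \to} \beta^*_j$. Negating the definition of convergence in probability, there is some $\epsilon > 0$ with $c := \limsup_n P(|\hat\beta_j - \beta^*_j| > \epsilon) > 0$. Passing to a subsequence along which this $\limsup$ is attained, I may assume $P(A_n) \to c$, where $A_n := \{|\hat\beta_j - \beta^*_j| > \epsilon\}$.

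Next, fix an arbitrary threshold $M > 0$ and set $B_n := \{\widehat{\SE}(\hat\beta_j) < \epsilon/M\}$. The assumption $\widehat{\SE}(\hat\beta_j) = o_p(1)$ gives $P(B_n) \to 1$. On $A_n \cap B_n$ one has $|T_j| = |\hat\beta_j - \beta^*_j|/\widehat{\SE}(\hat\beta_j) > \epsilon/(\epsilon/M) = M$, and the Fr\'echet inequality $P(A_n \cap B_n) \ge P(A_n) + P(B_n) - 1$ then yields $\liminf_n P(|T_j| > M) \ge c$. Crucially, this bound holds for \emph{every} $M > 0$, with the same constant $c$.

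Finally I would invoke tightness. If $T_j \overset{d}{\to} L$ for some proper distribution $L$, then $\{T_j\}$ is tight, so taking $\eta = c/2$ there exists $M$ with $\sup_n P(|T_j| > M) < c/2$; but the preceding step forces $\liminf_n P(|T_j| > M) \ge c$ for that same $M$, a contradiction. Hence no proper limit law can exist. I expect the only substantive step to be this reduction to tightness: once $A_n$ and $B_n$ are combined to make $|T_j|$ exceed every threshold with probability bounded below uniformly in $M$, the failure of tightness --- and therefore of convergence in distribution --- is immediate. A minor point to handle along the way is the implicit positivity of $\widehat{\SE}(\hat\beta_j)$, needed so that dividing by it is legitimate.
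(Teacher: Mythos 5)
Your proof is correct and is essentially the paper's argument: both proceed by contradiction, using the fact that convergence in distribution implies tightness (i.e., $T_j = O_p(1)$), which is then shown to be incompatible with $\widehat{\SE}(\hat\beta_j) = o_p(1)$ and $\hat\beta_j \overset{p}{\not\to} \beta^*_j$. The paper compresses your explicit subsequence/Fr\'echet-inequality bookkeeping into the one-line calculus $\hat\beta_j - \beta^*_j = T_j \cdot \widehat{\SE}(\hat\beta_j) = O_p(1)\,o_p(1) = o_p(1)$, contradicting the hypothesis; your event-level argument is a from-first-principles proof of that same step.
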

\begin{proof}
Suppose, for the sake of contradiction, that $(\hat \beta_j - \beta^*_j)/\widehat{\SE}(\hat\beta_j)$ converges in distribution. Then $(\hat \beta_j - \beta^*_j)/\widehat{\SE}(\hat\beta_j) = O_p(1)$. Thus
$$
\hat \beta_j - \beta^*_j = \frac{\hat \beta_j - \beta^*_j}{\widehat{\SE}(\hat\beta_j)}\widehat{\SE}(\hat\beta_j) = O_p(1)o_p(1) = o_p(1),
$$
so $\hat \beta_j \overset{p}{\to} \beta^*_j$, which is a contradiction. 
\end{proof}

\section{Lack of consistency of analytical method of \citet{wang2020methods}}

In Sections 2.1 and 2.2, we analyzed the analytical correction as implemented in the code by \citet{wang2020methods}. We now analyze the analytical correction as described in the publication, which is also not consistent for the parameter of interest in general. In the publication, the estimate obtained from the analytical correction is defined as
$$
\hat \beta_{\text{Wang}}^{\text{analytical}} = \hat \gamma_0 (\mathbf{x}_{\text{unlab}}^\top \mathbf{x}_{\text{unlab}})^{-1}\mathbf{x}_{\text{unlab}}^\top   \mathbf{1}_{n_{\text{unlab}}} + \hat \gamma_1 \hat \beta_{\text{naive}}
$$
where $$
\hat \gamma_1 = \frac{\widehat{\Cov}(\mathbf{y}_{\text{lab}}, \hat{f}(\mathbf{z}_{\text{lab}}))}{\widehat{\Var}(\hat{f}(\mathbf{z}_{\text{lab}}))} 
$$
and 
$$
\hat \gamma_0 = \hat \E[\mathbf{y}_{\text{lab}}] - \hat \gamma_1 \hat \E[\hat{f}(\mathbf{z}_{\text{lab}})].
$$
Thus as $n_{\text{lab}}$ increases, 
$$
\hat \gamma_1 \overset{p}{\to} \frac{{\Cov}(Y, \hat{f}(Z))}{{\Var}(\hat{f}(Z))} \equiv \gamma_1
$$
and
$$
\hat \gamma_0 \overset{p}{\to}  \E[Y] -  \gamma_1  \E[\hat{f}(Z)] \equiv \gamma_0.
$$
Thus as $n_{\text{lab}}$ and $n_{\text{unlab}}$ increase,
$$
\hat \beta_{\text{Wang}}^{\text{analytical}}  \overset{p}{\to}  \gamma_0 \E[XX^\top]^{-1}\E[X] +  \gamma_1 \E[XX^\top]^{-1}\E[X\hat{f}(Z)],
$$
which is not equal to $\beta^*$ in general. 

In the extreme setting (described in Section 2.2), in which $\hat{f}(z) = \E[Y | Z = z]$, we have that 
$$
\gamma_1 = \frac{{\Cov}(Y, \hat{f}(Z))}{{\Var}(\hat{f}(Z))} = \frac{{\Cov}(Y, \E[Y | Z])}{{\Var}(\E[Y | Z])} = \frac{{\Var}(\E[Y | Z])}{{\Var}(\E[Y | Z])} = 1
$$ and $$\gamma_0 = \E[Y] -  \gamma_1  \E[\hat{f}(Z)] = \E[Y] - \E[\E[Y | Z]] = 0.$$ Thus using the same argument as for the naive estimator, this analytical correction is consistent for $\beta^*$ in this extreme setting.

\section{Inferential consequences of wrong distribution}
\label{app:fixed}

In this section, we examine how violation of the distributional assumption impacts type 1 error control and coverage in the simulations in Section 3. In Figure \ref{fig:type1errorfixed}, we see that the methods proposed by \citet{wang2020methods} do not control the type 1 error rate for arbitrary $\hat{f}(\cdot)$; the situation gets worse as the sample size increases. However, the method proposed by \citet{angelopoulos2023prediction} does control the type 1 error rate. \citet{wang2020methods} controls the type 1 error rate if the machine learning model is the true regression function $\hat{f}(z) = \E[Y | Z = z]$; of course, such a perfect machine learning model is unattainable in practice.

\begin{figure}[!h]
\begin{center}
\includegraphics[width=\textwidth]{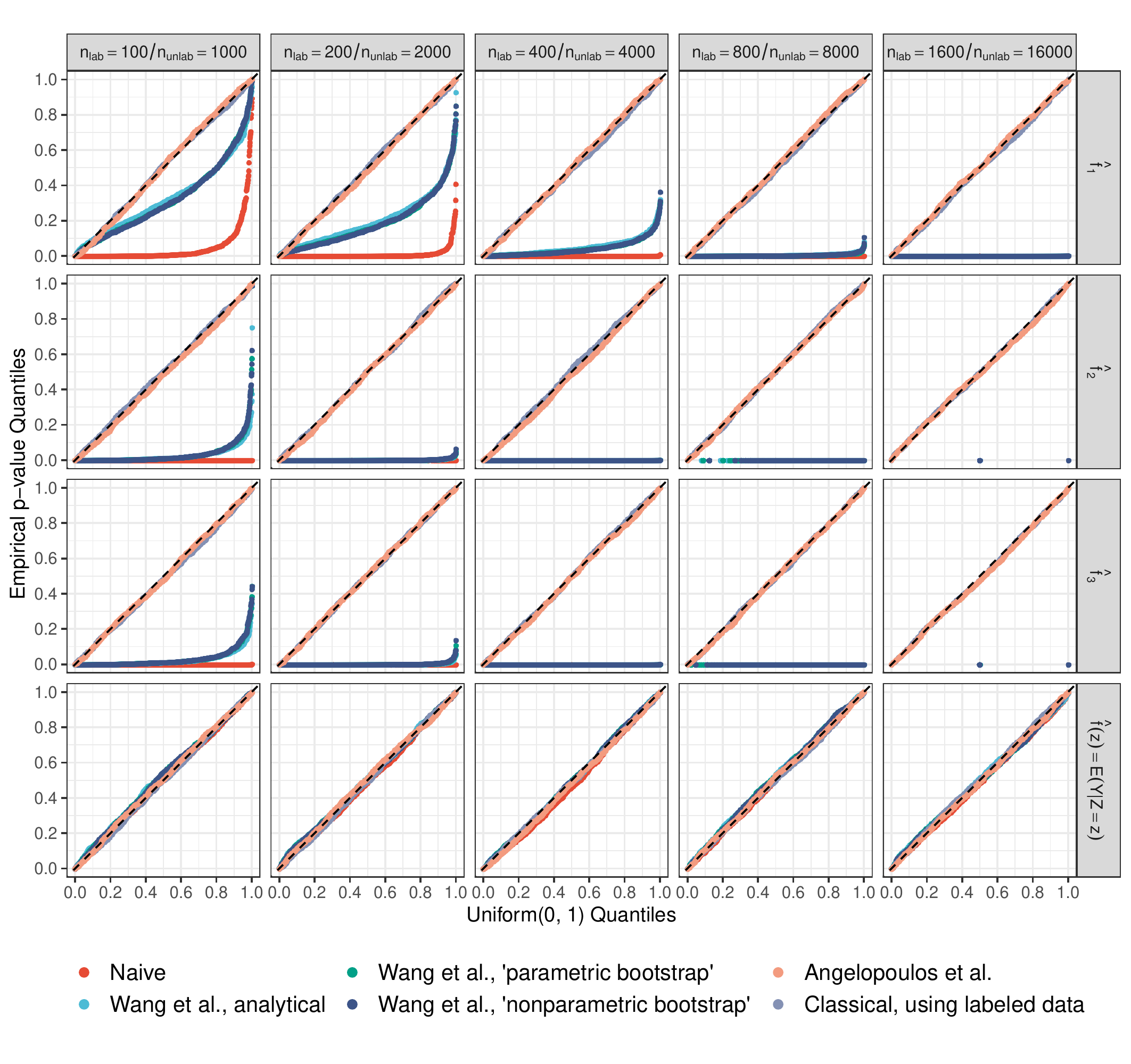}
\end{center}
\vspace{-25pt}
\caption{For labeled and unlabeled datasets generated under $H_0: \beta_1^* = 0$, quantile-quantile plots of the p-values across replicates of the modified simulation study are displayed for each of the four $\hat{f}(\cdot)$'s considered. The methods and simulation setup are described in Section~\ref{subsec:se}.  Each panel corresponds to different sample sizes of the labeled and unlabeled datasets used for inference. 
  The naive method and the bootstrap and analytical corrections considered by \cite{wang2020methods} become increasingly anticonservative as the sample sizes increases, unless the machine learning model is perfect, i.e.   $\hat{f}(z) = \E[Y | Z = z]$. \label{fig:type1errorfixed}}
\end{figure}

\begin{figure}[!h]
\begin{center}
\includegraphics[width=\textwidth]{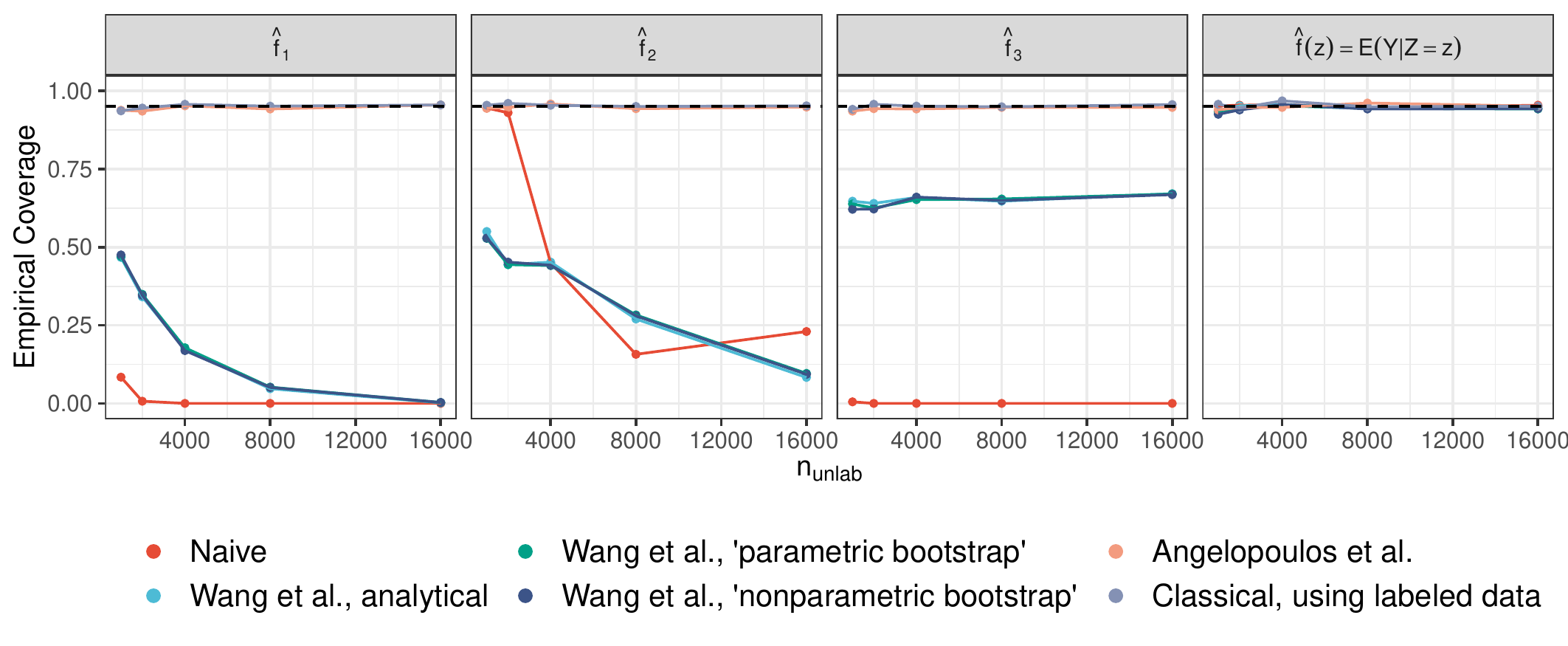}
\end{center}
\vspace{-25pt}
\caption{For labeled and unlabeled datasets generated with $\beta_1^* = 1$, empirical coverage of 95\% confidence intervals for each method across each simulation replicate, for each of the four $\hat{f}(\cdot)$'s considered, as the labeled and unlabeled sample sizes increase, with $n_{\text{lab}} = 0.1 n_{\text{unlab}}$. The methods and simulation setup are described in Section~\ref{subsec:se}. \label{fig:coveragefixed}}
\end{figure}

In Figure \ref{fig:coveragefixed}, we  see that the methods proposed by \citet{wang2020methods} do not attain the nominal coverage for an arbitrary $\hat{f}(\cdot)$,  whereas the proposal of \citet{angelopoulos2023prediction} does attain the nominal coverage. The naive method and the proposals of \citet{wang2020methods} have appropriate coverage when $\hat{f}(z) = \E[Y | Z = z]$; again, this is unrealistic in practice.

\end{appendix}

\end{document}